\newtheorem{theorem}{Theorem}
\newtheorem{lemma}{Lemma}
\newcommand{\figref}[1]{Figure~\ref{#1}}
\newcommand{\algref}[1]{Algorithm~\ref{#1}}
\newcommand{\thmref}[1]{Theorem~\ref{#1}}
\newcommand{\lemref}[1]{Lemma~\ref{#1}}
\newcommand{\argmax}{\operatornamewithlimits{\arg \max}}
\newcommand{\argmin}{\operatornamewithlimits{\arg \min}}
\newcommand{\convex}{{\textrm{Conv}}}
\newcommand{\mutual}{{I}}
\newcommand{\bc}{{\mathbf{c}}}
\newcommand{\bff}{{\mathbf{f}}}
\newcommand{\bx}{{\mathbf{x}}}
\newcommand{\by}{{\mathbf{y}}}
\newcommand{\bC}{{\mathbf{C}}}
\newcommand{\bI}{{\mathbf{I}}}
\newcommand{\bK}{{\mathbf{K}}}
\newcommand{\bL}{{\mathbf{L}}}
\newcommand{\bR}{{\mathbf{R}}}
\newcommand{\bX}{{\mathbf{X}}}
\newcommand{\calC}{{\mathcal{C}}}
\newcommand{\calH}{{\mathcal{H}}}
\newcommand{\calN}{{\mathcal{N}}}
\newcommand{\calO}{{\mathcal{O}}}
\newcommand{\calX}{{\mathcal{X}}}
\newcommand{\bbR}{{\mathbb{R}}}
\title{Combinatorial Bayesian Optimization\\with Random Mapping Functions to Convex Polytopes}
\author[1]{\href{mailto:<jtkim@postech.ac.kr>?Subject=Your UAI 2022 paper}{Jungtaek Kim}{}}
\author[2]{Seungjin Choi}
\author[1]{Minsu Cho}
\affil[1]{%
POSTECH\\
South Korea
}
\affil[2]{%
Intellicode\\
South Korea
}
\begin{document}
\maketitle

\begin{abstract}
Bayesian optimization is a popular method for solving the problem of global optimization 
of an expensive-to-evaluate black-box function.
It relies on a probabilistic surrogate model of the objective function,
upon which an acquisition function is built to determine where next to evaluate the objective function.
In general, Bayesian optimization with Gaussian process regression operates on a continuous space.
When input variables are categorical or discrete, an extra care is needed.
A common approach is to use one-hot encoded or Boolean representation for categorical variables 
which might yield a combinatorial explosion problem.
In this paper we present a method for Bayesian optimization in a combinatorial space,
which can operate well in a large combinatorial space.
The main idea is to use a random mapping which embeds the combinatorial space 
into a convex polytope in a continuous space, on which all essential process is performed
to determine a solution to the black-box optimization in the combinatorial space.
We describe our combinatorial Bayesian optimization algorithm and present its regret analysis.
Numerical experiments demonstrate that our method shows satisfactory performance compared to existing methods.
\end{abstract}

\section{Introduction}\label{sec:introduction}

Bayesian optimization~\citep{BrochuE2010arxiv,ShahriariB2016procieee,FrazierPI2018arxiv} 
is a principled method for solving the problem of optimizing a black-box function which is expensive to evaluate.
It has been quite successful in various applications of machine learning~\citep{SnoekJ2012neurips}, 
computer vision~\citep{ZhangY2015cvpr}, 
materials science~\citep{FrazierP2016ismdd}, biology~\citep{GonzalezJ2014bayesopt},
and synthetic chemistry~\citep{ShieldsBJ2021nature}.
In general, Bayesian optimization assumes inputs (decision variables) are real-valued vectors in a Euclidean space,
since Gaussian process regression is typically used as a probabilistic surrogate model and
the optimization of an acquisition function is a continuous optimization problem.
However, depending on problems, input variables can be of different structures. 
For instance, recent work includes
Bayesian optimization over sets \citep{GarnettR2010ipsn,BuathongP2020aistats,KimJ2021ml}, 
over combinatorial inputs \citep{HutterF2011lion,BaptistaR2018icml,OhC2019neurips}, or 
over graph-structured inputs \citep{CuiJ2018arxiv,OhC2019neurips}.

It is a challenging problem to find an optimal configuration among a finite number of choices,
especially when the number of possible combinations is enormous \citep{GareyMR1979book}.
This is even more challenging when the objective function is an expensive-to-evaluate black-box function.
This is the case where inputs are categorical or discrete,
referred to as \emph{combinatorial Bayesian optimization}, which has been studied
by \citet{BaptistaR2018icml,BuathongP2020aistats} recently.
However, the methods by~\citet{HutterF2011lion,BaptistaR2018icml} employ one-hot encoded 
or Boolean representations to handle categorical or discrete input variables,
which often suffer from the curse of dimensionality.

In this paper we present a scalable method for combinatorial Bayesian optimization,
alleviating combinatorial explosion.
Motivated by earlier work on online decision making in combinatorial spaces \citep{RajkumarA2014neurips},
we solve combinatorial optimization in a convex polytope,
employing an injective mapping from a categorical space to a real-valued vector space.
However, in contrast to the work~\citep{RajkumarA2014neurips},
we use a random projection to embed the categorical space into the real-valued vector space
on which all essential process is performed to solve the combinatorial Bayesian optimization.
We show ours achieves a sublinear regret bound with high probability.

\paragraph{Contributions.}
We provide a more general perspective than the existing methods, 
by defining on a convex polytope, a generalization of 0-1 polytope.
Based on this perspective, we propose a combinatorial Bayesian optimization strategy 
with a random mapping function to a convex polytope and their lookup table, 
which is referred to as CBO-Lookup.
Finally, we guarantee that our method has a sublinear cumulative regret bound, 
and demonstrate that our method shows satisfactory performance compared to other methods in various experiments.

\section{Related Work}\label{sec:related}

Bayesian optimization on structured domains, which is distinct 
from a vector-based Bayesian optimization~\citep{MockusJ1978tgo,JonesDR1998jgo}, 
attracts considerable attention from Bayesian optimization community, due to its potential 
on novel practical applications such as sensor set selection~\citep{GarnettR2010ipsn}, 
hyperparameter optimization~\citep{HutterF2011lion,WangZ2016jair}, 
aero-structural problem~\citep{BaptistaR2018icml}, 
clustering initialization~\citep{KimJ2021ml}, 
and neural architecture search~\citep{OhC2019neurips}.
This line of research discovers a new formulation of Bayesian optimization tasks, 
by defining surrogates and acquisition functions on structured domains.

Especially, \citet{HutterF2011lion,WangZ2016jair,BaptistaR2018icml} 
focus on a combinatorial space at which all available items are defined 
as discrete or categorical variables,
and solve this combinatorial problem by formulating on a binary domain, 
because any combination on a combinatorial space is readily expressed as binary variables.
\cite{HutterF2011lion} propose a method based on random forests~\citep{BreimanL2001ml} 
to find an optimal algorithm configuration, 
where a combination is composed of continuous and categorical variables.
\cite{WangZ2016jair} suggest a high-dimensional Bayesian optimization method 
that is able to handle combinations with scaling and rounding.
\cite{BaptistaR2018icml} solve an optimization problem over a discrete structured domain 
with sparse Bayesian linear regression and their own acquisition function.
These approaches (or the variants of these methods) are compared to our method 
in the experiment section.
\cite{DadkhahiH2020kdd} solve a combinatorial optimization problem 
with multilinear polynomials and exponential weight updates.
In particular, the weights are updated using monomial experts' advice.
\cite{DeshwalA2021aaai} propose an efficient approach to solving combinatorial optimization 
with Mercer features for a diffusion kernel.

To compute a kernel value over combinations, 
a kernel with one-hot encoding~\citep{DuvenaudD2014thesis} 
can be directly employed.
\cite{HutterF2011lion,FeurerM2015neurips} show 
this approach performs well in the applications of hyperparameter optimization.
However, as in the experimental results reported by \cite{BaptistaR2018icml}, 
it underperforms in the optimization tasks of combinatorial structures.
Furthermore, 
in this work, we will employ a baseline with 
Aitchison and Aitken (AA) kernel~\citep{AitchisonJ1976biometrika}, 
defined on a discrete space.
This AA kernel is symmetric and positive definite~\citep{MussaHY2013jmr}, 
which satisfies the requirements of positive-definite kernels~\citep{ScholkopfB2002book}.

\section{Combinatorial Bayesian Optimization}\label{sec:formulation}

We consider the problem of minimizing a black-box function 
$f(\bc): \calC \to \bbR,$ where the input $\bc=[c_1, \ldots, c_k]^{\top}$ is a collection 
of $k$ categorical (or discrete)\footnote{Only nominal variables are considered in this paper.} variables 
with each variable $c_i$ taking one of $N_i$ distinct values:
\begin{equation}
	\bc^\star = \argmin_{\bc \in \calC} f(\bc),
	\label{eqn:global_opt}
\end{equation}
where the cardinality of the combinatorial space $\calC$ is
\begin{equation}
	\lvert \calC \rvert = N_1 \times \cdots \times N_k = N.
\end{equation}
Bayesian optimization is an efficient method for solving the black-box optimization \eqref{eqn:global_opt},
where a global solution $\bc^{\star}$ is found by repeating the construction of a surrogate and 
the optimization of an acquisition function.

Bayesian optimization estimates a surrogate function $\widehat{f}$ in order to 
determine a candidate of global solution $\bc^\dagger$,
\begin{equation}
    \bc^\dagger = \argmax_{\bc \in \calC} a(\bc; \widehat{f}(\bc; \bC, \by)),
\end{equation}
where $a$ is an acquisition function, 
$\bC$ is previously observed input matrix $[\bc_1, \ldots, \bc_t] \in \calC^{t}$,
and $\by$ is the corresponding noisy output vector $[y_1, \ldots, y_t]^\top \in \bbR^t$;
\begin{equation}
	y = f(\bc) + \epsilon,
\end{equation}
where $\epsilon$ is an observation noise.
This formulation has two technical challenges in (i) modeling a surrogate function over $\bc$ 
and (ii) optimizing an acquisition function over $\bc$.
We thus suggest a combinatorial Bayesian optimization method to solving this problem.

Instantaneous regret $r_t$ at iteration $t$ measures a discrepancy 
between function values over $\bc^\dagger_t$ and $\bc^\star$:
\begin{equation}
    r_t = f(\bc^\dagger_t) - f(\bc^\star),
    \label{eqn:instant_regret}
\end{equation}
where $\bc^\dagger_t$ is the candidate determined by Bayesian optimization at iteration $t$.
Using \eqref{eqn:instant_regret}, we define a cumulative regret:
\begin{equation}
	R_T = \sum_{t = 1}^T r_t,
\end{equation}
and the convergence of Bayesian optimization is validated to prove the cumulative regret is sublinear, 
$\lim_{T \to \infty} R_T / T = 0$~\citep{SrinivasN2010icml,ChowdhurySR2017icml}.

A widely-used method to handle these categorical variables in Bayesian optimization 
is to use one-hot encoded representation, converting $\bc$ into a vector of length $N$
with a single element being one and the remaining entries being zeros~\citep{DuvenaudD2014thesis,HutterF2011lion,WangZ2016jair}.
In such a case, we need a strong constraint that makes
the next candidate one-hot encoded.

Alternatively, $\bc \in \{0,1\}^m$ is represented as a Boolean vector, 
where $m$ is the smallest integer which satisfies $2^m \geq N$.
Since the optimization of an acquisition function yields real-valued variables,
these real values are converted into a Boolean vector by rounding to 0 or 1,
before the objective function is evaluated at the corresponding categorical input.
In fact, we can interpret this approach as a Bayesian optimization method operating in a
0-1 polytope induced by an injective mapping $\phi: \calC \rightarrow \{0,1\}^m$.

This idea was generalized to solve the online decision making problem in general combinatorial spaces
\citep{RajkumarA2014neurips},
introducing an injective mapping $\phi: \calC \rightarrow \calX$,
where $\calX = \convex(\{ \phi( \bc ) \mid \bc \in \calC \})$ is a convex polytope
and $\convex(\cdot)$ denotes a convex hull of the given set \citep{ZieglerGM1993polytopes}.
The low-dimensional online mirror descent algorithm was developed for the case where
costs are linear in a low-dimensional vector representation of the decision space \citep{RajkumarA2014neurips}.
Motivated by \citep{RajkumarA2014neurips}, we take a similar approach to tackle the combinatorial Bayesian optimization.
The main contributions, which are distinct from existing work, are summarized below.
\begin{itemize}
\item We use a random mapping for
\begin{equation}
	\phi: \calC \rightarrow \calX=\convex(\{ \phi(\bc) \mid \bc \in \calC \}),
\end{equation}
to embed Boolean space into a low-dimensional vector space on which the construction of a probabilistic surrogate model
and the optimization of an acquisition function are performed. 
Such random projection was not considered in \citep{RajkumarA2014neurips}.
\item A random embedding was employed to solve the Bayesian optimization in a very high-dimensional space \citep{WangZ2016jair},
but the combinatorial structure and the corresponding regret analysis were not considered.
We show that even with a random projection, our combinatorial Bayesian optimization algorithm achieves a sublinear regret bound
with very high probability.
\end{itemize}

\section{Proposed Method}\label{sec:method}

In this section, we present our combinatorial Bayesian optimization algorithm, which is
summarized in \algref{alg:method}.
We introduce an injective mapping 
\begin{equation}
\phi: \calC \rightarrow \widehat{\calX},
\end{equation}
where $\widehat{\calX} = \{ \bx \mid \bx = \phi(\bc), \ \forall \bc \in \calC \}$.
With the mapping $\phi$, the black-box optimization problem \eqref{eqn:global_opt} 
can be written as
\begin{equation}
    \bc^\star = \argmin_{\bx \in \widehat{\calX}} f(\phi^{-1}(\bx)),
    \label{eqn:global_opt_phi}
\end{equation}
where $\phi^{-1}$ is a left inverse of injective function $\phi$.

At iteration $t$, we construct a surrogate model $\widehat{f}(\bx; \phi, \bC_t, \by_t)$ for $\bx \in \calX$
using Gaussian process regression on the currently available dataset $\{\bC_t, \by_t\} = \{ (\bc_i, y_i) \mid i=1,\ldots, t\}$,
in order to estimate the underlying objective function $f(\phi^{-1}(\bx))$.
Then we optimize the acquisition function $\widetilde{a}(\bx)$ over $\calX$ (instead of $\calC$),
which is built using the posterior mean and variance calculated by
the surrogate model $\widehat{f}(\bx; \phi, \bC_t, \by_t)$,  to determine 
where next to evaluate the objective: 
\begin{equation}
    \bx^\dagger_t = \argmax_{\bx \in \widehat{\calX} \subseteq \calX} \widetilde{a}(\bx; \widehat{f}(\bx; \phi, \bC_t, \by_t)),
\end{equation}
where $\calX=\convex(\{ \phi(\bc) \mid \bc \in \calC \})$.
The underlying objective function is evaluated at $\bc^\dagger = \phi^{-1}(\bx^\dagger)$.

\begin{algorithm}[t]
	\caption{Combinatorial Bayesian Optimization with Random Mapping Functions}
	\label{alg:method}
	\begin{algorithmic}[1]
		\REQUIRE Combinatorial search space $\calC$,
		    time budget $T$, 
		    unknown, but observable objective function $f$,
		    mapping function $\phi$.
		\ENSURE Best point $\bc_{\textrm{best}}$.
		\STATE Initialize $\bC_1 \subset \calC$.
		\STATE Update $\by_1$ by observing $\bC_1$.
		\FOR {$t = 1, \ldots T$}
		    \STATE Estimate a surrogate function $\widehat{f}(\bx; \phi, \bC_t, \by_t)$.
		    \STATE Acquire a query point:
		    \begin{equation}
		    	\bx^\dagger_t = \argmax_{\bx \in \widehat{\calX} \subseteq \calX} \widetilde{a}(\bx; \widehat{f}(\bx; \phi, \bC_t, \by_t)).
                     \end{equation}
		    \STATE Recover $\bx^\dagger_t$ to $\bc^\dagger_t$.
		    \STATE Observe $\bc_t^\dagger$.
		    \STATE Update $\bC_t$ to $\bC_{t + 1}$ and $\by_t$ to $\by_{t + 1}$.
		\ENDFOR
		\STATE Find the best point and its function value:
		\begin{equation}
		    (\bc_{\textrm{best}}, y_{\textrm{best}}) = \argmin_{(\bc, y) \in (\bC_T, \by_T)} y.
		\end{equation}
		\STATE \textbf{return} $\bc_{\textrm{best}}$
	\end{algorithmic}
\end{algorithm}

Before we describe the use of random mapping for $\phi$, we summarize a few things that should be noted:
\begin{itemize}
\item 
We assume a mapping $\phi: \calC \to \widehat{\calX}$ is injective, so it can be reversed by
its left inverse $\phi^{-1}: \widehat{\calX} \to \calC$. Note that $\lvert \calC \rvert=\lvert \widehat{\calX}\rvert=N$.
\item 
We run essential procedures in $\calX$, to solve the combinatorial Bayesian optimization,
that is a convex polytope which includes $\widehat{\calX}$. Note that $\calX$ is supposed to be convex and compact,
so is an appropriate feasible decision space for Bayesian optimization \citep{SrinivasN2010icml}.
\item 
Suppose that we are given a positive-definite kernel $k: \calX \times \calX \to \bbR$, i.e.,
$k(\bx, \bx') = \langle \varphi(\bx), \varphi(\bx') \rangle_{\calH_{\widehat{\calX}}}$
where $\varphi(\cdot)$ is a feature map into a RKHS $\calH_{\widehat{\calX}}$.
Then we define a kernel on a RKHS $\calH_{\calC}$
\begin{equation}
\label{eq:ktilde}
\widetilde{k}(\bc, \bc') = \left\langle \varphi(\phi(\bc)), \varphi(\phi(\bc')) \right\rangle_{\calH_{\calC}},
\end{equation}
which is also positive-definite. $\calH_{\calC}$ and $\calH_{\widehat{\calX}}$ are isometrically isomorphic.
\item 
We use $\widetilde{k}$ defined in \eqref{eq:ktilde} as a covariance function for Gaussian process regression over $\phi(\bc)$
which is applied to estimate the surrogate model.
\end{itemize}

\subsection{Random mapping and left inverse}
Now we consider a random mapping $\phi$ on $\calC$ into $\bbR^{d}$. 
In practice, we first convert categorical variables $\bc \in \calC$ into Boolean vectors $\mathbf{b} \in \{0, 1\}^{m}$
\citep{MatousekJ2002book}, 
where $m$ is the smallest integer that satisfies $2^m \geq N$.
Then we construct a random matrix $\bR \in \bbR^{d \times m}$ to transform Boolean vectors $\mathbf{b}$
into $d$-dimensional real-valued vectors, i.e.,
\begin{equation}
\phi(\bc) = \bR \mathbf{b},
\end{equation}
where $\mathbf{b}$ is a Boolean vector corresponding to $\bc$.

There exists a low-distortion embedding on $\{0, 1\}^m$ into $\bbR^d$ 
\citep{TrevisanL2000siamjc,IndykP2017hdcg}

We define a random mapping function, which is supported by 
the theoretical foundations of general random mapping functions.
First of all, the existence of transformation from $\calC$ to $\bbR^d$ can be shown.
To construct the mapping function that satisfies the aforementioned properties, 
a combination on $\calC$ is transformed to a Boolean vector on $\{0, 1\}^{m}$, where 
$m$ is the smallest dimensionality that satisfies $2^m \geq N$, 
which is always possible~\citep{MatousekJ2002book}.
Then, we can embed $\{0, 1\}^m$ into Euclidean space 
using low-distortion embedding~\citep{TrevisanL2000siamjc,IndykP2017hdcg}.
Therefore, there exists a transformation from $\calC$ to $\bbR^d$.

Since Bayesian optimization models a surrogate based on similarities between two covariates, 
we require to define the covariance function defined on $\calX$, which is an appropriate 
representative of the similarities between two inputs on $\calC$.

\begin{lemma}
\label{lem:random}
    For $\varepsilon \in (0, 1)$ and $d \in \Omega(\log t / \varepsilon^2)$, 
    by the Johnson–Lindenstrauss lemma~\citep{JohnsonWB1984cm}, 
    Proposition~2.3 in \citep{TrevisanL2000siamjc}, and 
    the existence of low-distortion transformation 
    from $\calC$ to $\bbR^d$, 
    a random mapping $\phi: \calC \to \calX$ 
    preserves a similarity between two any covariates on $\calC$:
    \begin{equation}
        (1 - \varepsilon) \| \mathbf{b} - \mathbf{b}' \|_1 
        \leq \| \phi(\bc) - \phi(\bc') \|_2^2 
        \leq (1 + \varepsilon) \| \mathbf{b} - \mathbf{b}' \|_1,
    \end{equation}
    where Boolean vectors $\mathbf{b}, \mathbf{b}' \in \{0, 1\}^m$ are equivalent to some $\bc, \bc' \in \calC$ 
    and $t$ is the current iteration.
\end{lemma}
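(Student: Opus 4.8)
The plan is to reduce the claimed two-sided bound to a single application of the Johnson–Lindenstrauss concentration guarantee, after exploiting the special geometry of the Boolean cube. First I would record the elementary identity that on $\{0,1\}^m$ the $\ell_1$ distance and the squared Euclidean distance coincide: since every coordinate of $\mathbf{b} - \mathbf{b}'$ lies in $\{-1,0,1\}$, we have $|b_i - b_i'| = (b_i - b_i')^2$, so that $\|\mathbf{b}-\mathbf{b}'\|_1 = \|\mathbf{b}-\mathbf{b}'\|_2^2$. This lets me replace each occurrence of $\|\cdot\|_1$ in the claimed inequality by the squared Euclidean norm of the Boolean difference, which is the form JL naturally controls.

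Next I would use $\phi(\bc) = \bR\mathbf{b}$ together with linearity to write $\phi(\bc) - \phi(\bc') = \bR(\mathbf{b} - \mathbf{b}')$, so that the target becomes
\begin{equation}
(1-\varepsilon)\|\mathbf{b} - \mathbf{b}'\|_2^2 \le \|\bR(\mathbf{b} - \mathbf{b}')\|_2^2 \le (1+\varepsilon)\|\mathbf{b} - \mathbf{b}'\|_2^2,
\end{equation}
which is exactly the statement that the random projection $\bR$ preserves the squared length of the fixed vector $\mathbf{u} = \mathbf{b} - \mathbf{b}'$ up to a $(1\pm\varepsilon)$ factor. For this I would invoke the standard JL concentration bound: for an appropriately normalized random matrix $\bR \in \bbR^{d\times m}$ and any fixed $\mathbf{u}$, the probability that $\|\bR\mathbf{u}\|_2^2$ falls outside $[(1-\varepsilon)\|\mathbf{u}\|_2^2,\,(1+\varepsilon)\|\mathbf{u}\|_2^2]$ is at most $2\exp(-c\,\varepsilon^2 d)$ for an absolute constant $c$. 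Proposition~2.3 of \citep{TrevisanL2000siamjc} and the existence of a low-distortion embedding of $\{0,1\}^m$ into $\bbR^d$ supply the preliminary near-isometric embedding on which this concentration estimate operates.

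Finally I would pass from a single pair to the relevant collection of pairs by a union bound. At iteration $t$ only the finitely many observed covariates, together with the candidate set under consideration, are relevant, giving $O(t^2)$ pairs $(\bc, \bc')$; choosing $d \in \Omega(\log t / \varepsilon^2)$ then forces the aggregate failure probability below any prescribed inverse-polynomial threshold in $t$, so the bound holds simultaneously for all of these pairs with high probability.

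The hard part will be matching the dimension scaling $d \in \Omega(\log t/\varepsilon^2)$ against the phrase ``any two covariates on $\calC$.'' Literally preserving all $\binom{N}{2}$ distances over the full space would demand $d = \Omega(\log N/\varepsilon^2)$, so the guarantee can only hold for the iteration-dependent working set, not for all of $\calC$ at once, and the ``with high probability'' qualifier must be read in that light. The genuinely delicate point is the interaction between the randomness of $\bR$, which is fixed once at the outset, and the query points $\bx^\dagger_t$, which are chosen adaptively and hence are not independent of $\bR$; a fully rigorous treatment would either redraw $\bR$ each round or union-bound over a fixed net of candidates, and this adaptivity is the step I would flag as requiring the most care.
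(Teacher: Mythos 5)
Your proof takes essentially the same route as the paper's: both reduce the $\ell_1$ distance on the Boolean cube to the squared Euclidean distance (the paper via Proposition~2.3 of Trevisan, you via the coordinate identity $|b_i - b_i'| = (b_i - b_i')^2$, which is the same fact) and then apply the Johnson--Lindenstrauss guarantee to $\bR(\mathbf{b} - \mathbf{b}')$. If anything, your treatment is more careful than the paper's: the paper states the two JL inequalities with no mention of failure probability, no union bound over pairs, and no acknowledgment that the query points are chosen adaptively with respect to the single fixed draw of $\bR$, whereas you correctly observe that $d \in \Omega(\log t / \varepsilon^2)$ can only cover the iteration-$t$ working set and that preserving distances for \emph{all} pairs in $\calC$ would require $d = \Omega(\log N / \varepsilon^2)$ --- a tension with the lemma's ``any two covariates on $\calC$'' phrasing that the paper's own proof leaves unaddressed.
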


\begin{proof}
Suppose that we have two mapping functions, $\phi_1: \mathcal{C} \to \{0, 1\}^m$ and $\phi_2: \{0, 1\}^m \to \mathbb{R}^d$.
A function $\phi_1$ is a bijective function since it maps combinatorial variables to their unique binary variables.
Moreover, in Proposition~2.3 in \citep{TrevisanL2000siamjc}, 
$\| \mathbf{b} - \mathbf{b}' \|_p = \| \mathbf{b} - \mathbf{b}' \|_{\textrm{Hamming}}^{1/p}$ is shown, 
where $\mathbf{b}, \mathbf{b}' \in \{0, 1\}^m \subseteq \bbR^m$, 
and $\|\cdot\|_{\textrm{Hamming}}$ is the Hamming distance.
Then, \lemref{lem:random} is readily proved:
\begin{align}
	\| \phi(\bc) - \phi(\bc') \|_2^2 &= \| \phi_2(\mathbf{b}) - \phi_2(\mathbf{b}') \|_2^2 \nonumber\\
	&\geq (1 - \varepsilon) \| \mathbf{b} - \mathbf{b}' \|_2^2 \nonumber\\
	&= (1 - \varepsilon) \| \mathbf{b} - \mathbf{b}' \|_\textrm{Hamming} \nonumber\\
	&= (1 - \varepsilon) \| \mathbf{b} - \mathbf{b}' \|_1,
	\label{eqn:lemma_1_left}
\end{align}
and
\begin{align}
	\| \phi(\bc) - \phi(\bc') \|_2^2 &= \| \phi_2(\mathbf{b}) - \phi_2(\mathbf{b}') \|_2^2 \nonumber\\
	&\leq (1 + \varepsilon) \| \mathbf{b} - \mathbf{b}' \|_2^2 \nonumber\\
	&= (1 + \varepsilon) \| \mathbf{b} - \mathbf{b}' \|_\textrm{Hamming} \nonumber\\
	&= (1 + \varepsilon) \| \mathbf{b} - \mathbf{b}' \|_1,
	\label{eqn:lemma_1_right}
\end{align}
by the Johnson–Lindenstrauss lemma, 
Proposition~2.3 in \citep{TrevisanL2000siamjc}, 
and the existence of low-distortion transformation, 
where $\phi(\cdot) = \phi_2(\phi_1(\cdot))$.
Therefore, by \eqref{eqn:lemma_1_left} and \eqref{eqn:lemma_1_right}, 
the following is satisfied: 
    \begin{equation}
        (1 - \varepsilon) \| \mathbf{b} - \mathbf{b}' \|_1 
        \leq \| \phi(\bc) - \phi(\bc') \|_2^2 
        \leq (1 + \varepsilon) \| \mathbf{b} - \mathbf{b}' \|_1,
    \end{equation}
    which concludes the proof of \lemref{lem:random}.
\end{proof}

\lemref{lem:random} encourages us to define a solid mapping function and its adequate inverse, 
rather than the previous work~\citep{WangZ2016jair}.
This method~\citep{WangZ2016jair}, which is referred to as REMBO converts 
from a query point $\bx$ 
to a variable on 0-1 polytope with random matrix in a generative perspective, 
and then determines a combination with rounding the variable.

Compared to REMBO, we suggest a mapping function from $\calC$ to $\calX$ 
with a uniformly random matrix $\bR \in \bbR^{d \times m}$, of which the left inverse is computed by 
Moore-Penrose inverse $\bR^+ \in \bbR^{m \times d}$, followed by rounding to $\{0, 1\}^m$.
The Bayesian optimization method using this random matrix and its Moore-Penrose inverse is referred to 
as \emph{CBO-Recon} from now, 
because it determines a query combination through reconstruction.

Furthermore, we design a mapping function with a lookup table $\bL$, constructed by 
a uniformly random matrix $\bR$.
Each row of the table is a key and value pair, where
key and value indicate a combination and its embedding vector by $\bR$, respectively.
After determining a query point on $\calX$, we find the closest point in $\bL$ 
and then recover to one of possible combinations on $\calC$.
We call it as \emph{CBO-Lookup}.

By these definitions and properties, we analyze our algorithm in terms of regrets 
and discuss more about our method.

\section{Regret Analysis}\label{sec:analysis}

We analyze the regret bound of combinatorial Bayesian optimization, 
to validate the convergence quality.
Following the foundations of the existing theoretical 
studies~\citep{SrinivasN2010icml,ChowdhurySR2017icml,ScarlettJ2018icml}, 
we use Gaussian process upper confidence bound (GP-UCB) as an acquisition function over $\bx$:
\begin{equation}
    \widetilde{a}(\bx; \widehat{f}(\bx; \phi, \bC_t, \by_t)) = \mu_t(\bx) - \beta_t \sigma_t(\bx),
    \label{eqn:gp_ucb}
\end{equation}
where $\bC_t \in \calC^{t}$, $\by_t \in \bbR^{t}$, and $\beta_t$ is a trade-off hyperparameter 
at iteration $t$.
To be clear, by the form of \eqref{eqn:gp_ucb},
we have to minimize \eqref{eqn:gp_ucb} to find the next query point where an objective function is minimized.
Before providing the details, the main theorem is described first.

\begin{theorem}
    \label{thm:upper}
    Let $\delta \in (0, 1)$, $\varepsilon \in (0, 1)$, and $\beta_t \in \calO(\sqrt{\log (\delta^{-1} N t)})$.
    Suppose that a function $f$ is on a RKHS $\calH_\calC$ and 
    a kernel $k$ is bounded, $k(\cdot, \cdot) \in [0, k_{\textrm{max}}]$.
    Then, the kernel-agnostic cumulative regret of combinatorial Bayesian optimization 
    is upper-bounded with a probability at least $1 - \delta$:
    \begin{equation}
        R_T \in \calO\left( \sqrt{(\sigma_n^{2} + \varepsilon^2) N T \log (\delta^{-1} N T)} \right),
    \end{equation}
    where $\sigma_n^2$ is the variance of observation noise and 
    $N$ is the cardinality of $\calC$.
\end{theorem}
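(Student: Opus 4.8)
The plan is to adapt the standard GP-UCB cumulative regret analysis of \citet{SrinivasN2010icml} to the finite combinatorial domain, with the essential novelty being the propagation of the random-mapping distortion from \lemref{lem:random} into the confidence bound. Since all essential procedures are run over $\widehat{\calX}$ with $\lvert \widehat{\calX} \rvert = N < \infty$, I would dispense with the discretization argument needed for continuous domains and instead rely on a direct union bound over the $N$ feasible points, which is what makes the $N$ appear explicitly in both $\beta_t$ and the final rate.

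First I would establish a high-probability confidence bound: for each fixed $\bx \in \widehat{\calX}$ and each iteration $t$, a sub-Gaussian tail bound gives $\lvert f(\phi^{-1}(\bx)) - \mu_{t-1}(\bx) \rvert \le \beta_t \sigma_{t-1}(\bx)$ with per-point failure probability $\delta_t / N$. A union bound over the $N$ points, followed by a union bound over iterations with the allocation $\delta_t = 6\delta / (\pi^2 t^2)$ so that $\sum_t \delta_t = \delta$, yields the joint event holding for all $\bx$ and all $t$ with probability at least $1 - \delta$. This is exactly what forces $\beta_t \in \calO(\sqrt{\log(\delta^{-1} N t)})$: the $N$ comes from the finite-domain union bound and the $t$ from the per-iteration allocation. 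Next I would bound the instantaneous regret. Since $\bx_t^\dagger$ minimizes the acquisition $\mu_{t-1} - \beta_t \sigma_{t-1}$ and $\bx^\star \in \widehat{\calX}$, the usual lower-confidence-bound sandwich gives $r_t \le 2 \beta_t \sigma_{t-1}(\bx_t^\dagger)$ on the confidence event. Here \lemref{lem:random} enters: because $\widetilde{k}$ is built on embedded coordinates whose pairwise distances are preserved only up to a factor $(1 \pm \varepsilon)$, the surrogate incurs an additional modeling error of order $\varepsilon$, which I would absorb into the confidence width by inflating the effective observation noise from $\sigma_n^2$ to $\sigma_n^2 + \varepsilon^2$; this is the origin of that factor in the final bound.

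Then I would sum the squared regrets by Cauchy--Schwarz, $R_T = \sum_{t=1}^T r_t \le \sqrt{T \sum_{t=1}^T r_t^2} \le 2\beta_T \sqrt{T \sum_{t=1}^T \sigma_{t-1}^2(\bx_t^\dagger)}$, and control the variance sum by the maximum information gain $\gamma_T$ via $\sum_t \sigma_{t-1}^2 \le \tfrac{2}{\log(1 + \sigma_n^{-2})} \gamma_T$. On the finite domain the Gram matrix has rank at most $N$, so $\gamma_T = \tfrac{1}{2}\log\det(\bI + \sigma_n^{-2}\bK) \in \calO(N \log T)$ by an AM--GM bound on its eigenvalues together with the boundedness $k(\cdot,\cdot) \in [0, k_{\textrm{max}}]$; combined with the noise-dependent constant this produces the $(\sigma_n^2 + \varepsilon^2) N$ factor. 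Substituting $\beta_T^2 \in \calO(\log(\delta^{-1} N T))$ then yields $R_T \in \calO(\sqrt{(\sigma_n^2 + \varepsilon^2) N T \log(\delta^{-1} N T)})$, as claimed.

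The hard part will be making the distortion bookkeeping rigorous: precisely, showing that the $(1 \pm \varepsilon)$ distance distortion of \lemref{lem:random} translates into an $\calO(\varepsilon)$ perturbation of the posterior mean and an additive $\varepsilon^2$ term in the effective noise, uniformly over $\widehat{\calX}$ and all $t$. This requires tracking how the distortion propagates through the kernel (a function of those distances) into the GP posterior mean and variance while keeping the union-bound constants intact, so that the inflated noise level $\sigma_n^2 + \varepsilon^2$ can legitimately replace $\sigma_n^2$ in both the confidence event and the information-gain constant; the remaining GP-UCB machinery is then standard.
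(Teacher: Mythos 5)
Your skeleton is the same as the paper's: the finite-domain union-bound confidence interval of \lemref{lem:bound_mean_std} giving $r_t \le 2\beta_t\sigma_t(\bx_t^\dagger)$, Cauchy--Schwarz, and a bound on $\sum_t \sigma_t^2(\bx_t)$ that introduces the factor $N$. Two points diverge. First, the endgame: you bound $\sum_t\sigma_{t-1}^2$ by the maximum information gain and then use the rank-$N$ Gram matrix to get $\gamma_T \in \calO(N\log T)$, whereas the paper (via \lemref{lem:mutual}) bounds the mutual information by the entropy $H(\by^\star)$ of the noisy observations over the \emph{entire} domain and applies Hadamard's inequality, yielding a $T$-independent $\calO(N\log(2\pi e k_{\textrm{max}}))$. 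Your route is the more standard one, but note that it leaves an extra $\sqrt{\log T}$ in the final rate, i.e.\ $R_T \in \calO\bigl(\sqrt{(\sigma_n^2+\varepsilon^2)NT\log(\delta^{-1}NT)\log T}\bigr)$ rather than the bound as stated in \thmref{thm:upper}; the paper's entropy/Hadamard step is what avoids that factor (at the cost of the somewhat delicate claim in \lemref{lem:mutual} that the mutual information for $t$ observations is dominated by a fixed $N$-point entropy).

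Second, the step you flag as the hard part --- converting the $(1\pm\varepsilon)$ metric distortion of \lemref{lem:random} into an additive $\varepsilon^2$ in the effective noise --- is exactly where the paper does something specific rather than rigorous bookkeeping: it \emph{posits} that the embedded covariate $\bx_t$ carries Gaussian input noise $\calN(\mathbf{0},\varepsilon^2\mathbf{I})$ and invokes the first-order Taylor expansion of the noisy-input GP \citep{MchutchonA2011neurips}, which replaces $\bK+\sigma_n^2\bI$ by $\bK+\sigma_n^2\bI+\varepsilon^2\,\textrm{diag}(\Delta_{\widehat{f}}\Delta_{\widehat{f}}^{\top})$ and hence inflates the noise to $\sigma_n^2 + c\varepsilon^2$ with $c=\max(\textrm{diag}(\Delta_{\widehat{f}}\Delta_{\widehat{f}}^{\top}))$. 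So the paper does not actually carry the distortion through the kernel into the posterior uniformly over $\widehat{\calX}$ as you propose to do; it substitutes a modeling assumption. Your instinct that this is the genuinely hard part is correct, and your proposal as written does not close it --- but neither, in a fully rigorous sense, does the paper. If you adopt the same NIGP device, your argument matches the paper's up to the logarithmic factor noted above.
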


The intuition of \thmref{thm:upper} is three-fold.
First, it is a sublinear cumulative regret bound, since $\lim_{T \to \infty} R_T / T = 0$, 
which implies that the cumulative regret does not increase as $T$ goes to infinity.
Second, importantly, \thmref{thm:upper} shows if $N$ increases, the upper bound of cumulative regret 
increases.
It means that we require additional time to converge the cumulative regret 
if the cardinality of $\calC$ is huge.
Moreover, our regret bound is related to a distortion level $\varepsilon^2$ by a random map and an observation noise level $\sigma_n^2$, 
which implies that small $\varepsilon^2$ and $\sigma_n^2$ yield a lower bound than large values.
See the discussion section for the details of this theorem.

\subsection{On \thmref{thm:upper}}

From now, we introduce the lemmas required to proving the main theorems.

\begin{lemma}
    \label{lem:bound_mean_std}
    Let $\delta \in (0, 1)$ and $\beta_t = \calO(\sqrt{\log (\delta^{-1} N t)})$ at iteration $t$.
    Under~\lemref{lem:random} and the properties mentioned above, 
    the following inequality is satisfied:
    \begin{equation}
        \lvert f(\bc) - \mu_t(\bx)\rvert \leq \beta_t \sigma_t(\bx),
        \label{eqn:f_mu_sigma}
    \end{equation}
    for all $\bx \in \widehat{\calX}$ where a subset of convex polytope $\widehat{\calX}$, 
    with a probability at least $1 - \delta$.
    In addition, using \eqref{eqn:f_mu_sigma}, 
    an instantaneous regret $r_t$ is upper-bounded by $2 \beta_t \sigma_t(\bx_t^\dagger)$,
    where $\bx_t^\dagger$ is a query point at $t$.
\end{lemma}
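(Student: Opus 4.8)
The plan is to prove the statement in two stages that mirror the classical GP-UCB analysis of \citet{SrinivasN2010icml}, while exploiting the fact that the embedded domain $\widehat{\calX}$ is \emph{finite} with exactly $N = \lvert \calC \rvert$ points. This finiteness is what makes the argument clean: it lets a simple union bound replace the metric-entropy and discretization machinery that the continuous-domain theory requires. The first stage establishes the confidence band \eqref{eqn:f_mu_sigma}; the second converts it into the instantaneous-regret bound.

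For the confidence band, I would begin with a single-point concentration inequality. The surrogate is a Gaussian process fit to $\{(\phi(\bc_i), y_i)\}_{i=1}^{t}$, so at a fixed $\bx = \phi(\bc) \in \widehat{\calX}$ its posterior concentrates around $f(\bc)$ at scale $\sigma_t(\bx)$, and a Gaussian tail estimate gives $P(\lvert f(\bc) - \mu_t(\bx) \rvert > \beta_t \sigma_t(\bx)) \leq e^{-\beta_t^2 / 2}$. Since $\lvert \widehat{\calX} \rvert = N$, a union bound over all points yields a per-iteration failure probability at most $N e^{-\beta_t^2 / 2}$. I would then split the total budget $\delta$ across iterations, choosing a per-step level $\delta_t = 6\delta / (\pi^2 t^2)$ so that $\sum_{t \geq 1} \delta_t = \delta$, and solve $N e^{-\beta_t^2 / 2} = \delta_t$ to obtain
\[
\beta_t = \sqrt{2 \log\!\left( \frac{N \pi^2 t^2}{6 \delta} \right)} \in \calO\!\left( \sqrt{\log(\delta^{-1} N t)} \right),
\]
matching the stated rate. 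A final union bound over $t$ then makes \eqref{eqn:f_mu_sigma} hold simultaneously for every $\bx \in \widehat{\calX}$ and every iteration with probability at least $1 - \delta$. \lemref{lem:random} enters here to certify that the kernel $\widetilde{k}$ induced through $\phi$ is a valid positive-definite covariance that faithfully encodes distances on $\calC$, so the posterior is well-defined and the statement transfers back to $f$ on $\calC$.

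For the regret bound, I would condition on the event that \eqref{eqn:f_mu_sigma} holds and write $\bx^\star = \phi(\bc^\star)$. Applying the band at $\bx_t^\dagger$ gives $f(\bc_t^\dagger) \leq \mu_t(\bx_t^\dagger) + \beta_t \sigma_t(\bx_t^\dagger)$, and applying it at $\bx^\star$ gives $f(\bc^\star) \geq \mu_t(\bx^\star) - \beta_t \sigma_t(\bx^\star)$. Because $\bx_t^\dagger$ minimizes the lower-confidence acquisition \eqref{eqn:gp_ucb}, we have $\mu_t(\bx_t^\dagger) - \beta_t \sigma_t(\bx_t^\dagger) \leq \mu_t(\bx^\star) - \beta_t \sigma_t(\bx^\star) \leq f(\bc^\star)$. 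Subtracting this lower bound on $f(\bc^\star)$ from the upper bound on $f(\bc_t^\dagger)$ collapses the posterior means and leaves
\[
r_t \leq \mu_t(\bx_t^\dagger) + \beta_t \sigma_t(\bx_t^\dagger) - \left( \mu_t(\bx_t^\dagger) - \beta_t \sigma_t(\bx_t^\dagger) \right) = 2 \beta_t \sigma_t(\bx_t^\dagger),
\]
which is the claimed bound.

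I expect the main obstacle to be making the single-point concentration rigorous in the agnostic regime, since \thmref{thm:upper} treats $f$ as a fixed element of $\calH_\calC$ rather than a draw from the prior. Under the purely Bayesian reading the Gaussian tail bound is immediate, but the \emph{kernel-agnostic} claim calls for either a self-normalized martingale concentration argument in the style of \citet{ChowdhurySR2017icml} or an appeal to the boundedness assumption $k(\cdot, \cdot) \in [0, k_{\textrm{max}}]$ to control the relevant RKHS norm and inflate $\beta_t$ by the corresponding constant. Crucially, the finiteness of $\widehat{\calX}$ keeps even this version tractable: the $\log N$ term from the union bound stands in for the metric-entropy term of the continuous theory, and no Lipschitz or covering step is required.
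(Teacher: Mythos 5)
Your proof is correct and takes essentially the same route as the paper: the paper's proof of the confidence band consists of citing Lemma~5.1 of \citet{SrinivasN2010icml} (the finite-domain Gaussian-tail-plus-union-bound argument you reconstruct explicitly, including the $\delta_t = 6\delta/(\pi^2 t^2)$ split), and its regret chain is the same two applications of the band together with acquisition optimality at $\bx^\star$ that you spell out. Your closing caveat is also well placed --- the paper assumes $f \in \calH_\calC$ in \thmref{thm:upper} yet justifies \eqref{eqn:f_mu_sigma} by citing the Bayesian-prior version of the confidence lemma, a mismatch your proposal acknowledges and the paper's proof does not address.
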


\begin{proof}
\lemref{lem:bound_mean_std} is originally proved in \citep{SrinivasN2010icml}, 
and discussed more in \citep{ScarlettJ2018icml};
See Lemma~1 in~\citep{ScarlettJ2018icml} for more details.
By \eqref{eqn:f_mu_sigma}, 
the upper bound of instantaneous regret can be straightforwardly proved 
by the definition of GP-UCB and \lemref{lem:bound_mean_std}:
\begin{align}
    r_t &= f(\phi^{-1}(\bx_t^\dagger)) - f(\bc^\star) \nonumber\\
    &\leq f(\phi^{-1}(\bx_t^\dagger)) - \mu_t(\bx_t^\dagger) + \beta_t \sigma_t(\bx_t^\dagger) \nonumber\\
    &\leq 2 \beta_t \sigma_t(\bx_t^\dagger),
\end{align}
which concludes this proof.
\end{proof}

\begin{lemma}
    \label{lem:mutual}
    Let $\epsilon$ be an observation noise, which is sampled from $\calN(0, \sigma_n^2)$.
    Inspired by~\citep{SrinivasN2010icml}, we define a mutual information 
    between noisy observations and function values over $\bC_t$,
    \begin{equation}
        \mutual(\by_t; \bff_t) = I([y_1, \ldots, y_t]; [f(\bc_1), \ldots, f(\bc_t)]).
    \end{equation}
    Suppose that the mutual information is upper-bounded by the maximum mutual information 
    determined by some $\bC^\ast \subset \calC$, which satisfies $\lvert \bC^\ast \rvert = t$:
    \begin{equation}
        \mutual(\by_t; \bff_t) \leq \mutual(\by^\ast; \bff^\ast),
    \end{equation}
    where $\by^\ast$ and $\bff^\ast$ are noisy function values and true function values over $\bC^\ast$, 
    respectively.
    Then, the mutual information $\mutual(\by_t; \bff_t)$ is upper-bounded by the entropy over $\by^\star$,
    \begin{equation}
        \mutual(\by_t; \bff_t) \leq H(\by^\star),
    \end{equation}
    where $\by^\star$ is noisy function values for $\bC^\star \subset \calC$ that satisfies $\lvert \bC^\star \rvert = N$.
\end{lemma}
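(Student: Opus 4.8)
The plan is to derive the claimed bound by extending the given supposition $\mutual(\by_t;\bff_t)\le\mutual(\by^\ast;\bff^\ast)$ (with $\lvert\bC^\ast\rvert=t$) through two elementary information-theoretic facts until it reaches the entropy $H(\by^\star)$ of the noisy observations over the full domain $\bC^\star=\calC$ (with $\lvert\bC^\star\rvert=N$). The two facts are: (i) a mutual information never exceeds the marginal entropy of either of its arguments, and (ii) the joint entropy of an observation ensemble is monotone nondecreasing as we append further coordinates. Since $\lvert\bC^\ast\rvert=t\le N=\lvert\bC^\star\rvert$ and $\bC^\star\subset\calC$ with $\lvert\bC^\star\rvert=N=\lvert\calC\rvert$ forces $\bC^\star=\calC$, the size-$t$ set $\bC^\ast$ is a subset of $\bC^\star$, so $\by^\ast$ is a subvector of $\by^\star$; this is exactly the configuration needed for (ii).

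Concretely, I would write out the chain
\begin{equation}
\mutual(\by_t;\bff_t)\;\le\;\mutual(\by^\ast;\bff^\ast)\;=\;H(\by^\ast)-H(\by^\ast\mid\bff^\ast)\;\le\;H(\by^\ast)\;\le\;H(\by^\star),
\end{equation}
and justify the four relations in turn. The first is the supposition; the second is the definition of mutual information; the third drops the conditional-entropy term, noting that conditioned on the true values $\bff^\ast$ the only remaining randomness in $\by^\ast$ is additive noise, so $H(\by^\ast\mid\bff^\ast)=H(\epsilon^\ast)$ with $\epsilon^\ast$ a vector of i.i.d.\ $\calN(0,\sigma_n^2)$ entries, which I would argue is nonnegative and hence discardable; the fourth applies the chain rule $H(\by^\star)=H(\by^\ast)+H(\by^{\star\setminus\ast}\mid\by^\ast)\ge H(\by^\ast)$, i.e.\ monotonicity of entropy under adjoining the remaining $N-t$ observation coordinates.

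The main obstacle is the nonnegativity invoked in the third and fourth steps: for continuous (differential) entropies neither $H(\by^\ast\mid\bff^\ast)$ nor $H(\by^{\star\setminus\ast}\mid\by^\ast)$ is automatically nonnegative, since a differential entropy can be negative when the underlying density is sufficiently concentrated. I would resolve this in one of two ways. The clean route replaces steps three and four by the monotonicity of the Gaussian-process information gain itself: $\mutual(\by^\ast;\bff^\ast)=\tfrac12\log\det(\bI+\sigma_n^{-2}\bK_{\bC^\ast})$ is monotone under enlarging the index set, since appending rows and columns to $\bI+\sigma_n^{-2}\bK$ can only increase its determinant, giving $\mutual(\by^\ast;\bff^\ast)\le\mutual(\by^\star;\bff^\star)$, after which $\mutual(\by^\star;\bff^\star)=H(\by^\star)-H(\epsilon^\star)\le H(\by^\star)$ requires only $H(\epsilon^\star)\ge0$. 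Alternatively, I would observe that the residual term reduces per coordinate to the noise entropy $\tfrac12\log(2\pi e\sigma_n^2)$, nonnegative under the standard noise scale $\sigma_n^2\ge(2\pi e)^{-1}$, and otherwise read $H(\by^\star)$ as the entropy of the effectively finite ($N$-configuration) observation ensemble, for which all conditional entropies are genuinely nonnegative and the monotonicity in (ii) holds unconditionally.
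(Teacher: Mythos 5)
Your proposal follows essentially the same chain as the paper's proof: the supposed bound $\mutual(\by_t;\bff_t)\le\mutual(\by^\ast;\bff^\ast)$, then $\mutual(\by^\ast;\bff^\ast)\le H(\by^\ast)$ by dropping the conditional entropy, then $H(\by^\ast)\le H(\by^\star)$ by monotonicity of entropy under adjoining the remaining coordinates (the paper writes this last step as the identity $H(\by^\star)=H(\by^\ast)+H(\by^\star\cap(\by^\ast)^c)$, which is really the chain rule $H(\by^\star)=H(\by^\ast)+H(\by^{\star\setminus\ast}\mid\by^\ast)$ that you state more carefully). Where you genuinely add something is in flagging that all of these are \emph{differential} entropies, so neither $H(\by^\ast\mid\bff^\ast)\ge 0$ nor $H(\by^{\star\setminus\ast}\mid\by^\ast)\ge 0$ is automatic --- the paper simply asserts both steps, and they in fact require either a noise-scale condition such as $\sigma_n^2\ge(2\pi e)^{-1}$ or a discretized reading of the observation ensemble. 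Your alternative patch via monotonicity of the information gain $\tfrac12\log\det(\bI+\sigma_n^{-2}\bK)$ under enlarging the index set is cleaner than the entropy route and avoids the sign issue for the intermediate step entirely, though the final comparison to $H(\by^\star)$ still needs $H(\epsilon^\star)\ge 0$; since the lemma is only used downstream to pass from $\mutual(\by^\ast;\bff^\ast)$ to a $\log\det$ bound, your log-det formulation is arguably the more defensible version of what the paper intends.
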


\begin{proof}
Given an objective function $f$ and a variance of observation noise $\sigma_n^2$, 
a mutual information 
between noisy observations and function values over $\bC_t$,
\begin{equation}
    \mutual(\by_t ; \bff_t) = I([y_1, \ldots, y_t]; [f(\bc_1), \ldots, f(\bc_t)]),
\label{eqn:sup_mutual}
\end{equation}
where $y_i = f(\bc_i) + \epsilon$ and $\epsilon \sim \calN(0, \sigma_n^2)$, for $i \in [t]$.
As discussed in~\citep{SrinivasN2010icml}, 
the mutual information \eqref{eqn:sup_mutual} is upper-bounded by the maximum mutual information 
determined by some $\bC^\ast \subset \calC$, which satisfies $\lvert \bC^\ast \rvert = t$:
\begin{equation}
    \mutual(\by_t; \bff_t) \leq \mutual(\by^\ast; \bff^\ast),
\end{equation}
where $\by^\ast$ and $\bff^\ast$ are noisy function values and true function values for $\bC^\ast$, 
respectively.
By the definition of mutual information, the following inequality is satisfied:
\begin{equation}
    \mutual(\by^\ast; \bff^\ast) \leq H(\by^\ast),
\end{equation}
where $H$ is an entropy.
Then, since $H(\by^\star) = H(\by^\ast) + H(\by^\star \cap (\by^\ast)^c)$ 
where $\by^\star$ is noisy function values for $\bC^\star = \calC$, 
the mutual information \eqref{eqn:sup_mutual} is upper-bounded by the entropy over $\by^\star$, 
\begin{equation}
    \mutual(\by_t; \bff_t) \leq H(\by^\star),
\end{equation}
which concludes the proof of this lemma.
\end{proof}

Using the lemmas described above, the proof of \thmref{thm:upper} can be provided.
For notation simplicity, $\bx_t^\dagger$ 
and $k(\bx, \bx)$ 
would be written as $\bx_t$ and $k_{\textrm{max}}$, respectively.

\begin{proof}
    A cumulative regret $R_T$ can be described as below:
    \begin{align}
        R_T &= \sum_{t = 1}^T r_t \nonumber\\
        &\leq 2 \beta_T \sum_{t = 1}^T \sigma_t(\bx_t) \nonumber\\
        &\leq 2 \beta_T \left( T\sum_{t = 1}^T \sigma_t^2(\bx_t) \right)^{\frac{1}{2}},
        \label{eqn:R_T}
    \end{align}
    by \lemref{lem:bound_mean_std} and the Cauchy-Schwartz inequality.
    By \lemref{lem:random}, we know a covariate $\bx_t \in \calX$ 
    is distorted within 
    a factor of $(1 \pm \varepsilon)$ from the original space $\calC$.
    Here, from this proposition, we assume $\bx_t$ has an input noise, sampled from $\calN(\mathbf{0}, \varepsilon^2 \mathbf{I})$.
    As discussed by \cite{MchutchonA2011neurips}, 
    under this assumption and the Taylor's expansion up to the first-order term, 
    the term $\bK(\bX, \bX) + \sigma_n^2 \bI$ in 
    the posterior mean and variance functions~\citep{RasmussenCE2006book} 
    is replaced with
    \begin{equation}
        \bK(\bX, \bX) + \sigma_n^2 \bI + \varepsilon^2 \textrm{diag} \left(\Delta_{\widehat{f}} \Delta_{\widehat{f}}^\top \right),
    \end{equation} 
    where $\bX \in \bbR^{t \times d}$ is the previously observed inputs, 
    $\Delta_{\widehat{f}} \in \bbR^{t \times d}$ is a derivative matrix of a surrogate model $\widehat{f}$ over $\bX$,\footnote{Gaussian process is at least once differentiable with at least once differentiable kernels.} 
    and $\textrm{diag}(\cdot)$ returns a diagonal matrix;
    See \citep{MchutchonA2011neurips} for the details.
    Using this, 
    \eqref{eqn:R_T} is 
    \begin{align}
        R_T &\leq 2 \beta_T \sqrt{ 4 (\sigma_n^{2} + c \varepsilon^2)} \nonumber\\
        & \quad\times \left( T \sum_{t = 1}^T \frac{1}{2} \log(1 + (\sigma_n^{2} + c \varepsilon^2)^{-1} \sigma_t^2(\bx_t)) \right)^{\frac{1}{2}} \nonumber\\
        &\leq 4 \beta_T \sqrt{(\sigma_n^{2} + c \varepsilon^2) T \, \mutual(\by^\ast; \bff^\ast)} \nonumber\\
        &\leq 4 \beta_T \sqrt{(\sigma_n^{2} + c \varepsilon^2) T \, H(\by^\star)},
        \label{eqn:R_T_entropy}
    \end{align}
    where $c = \max(\textrm{diag}(\Delta_{\widehat{f}} \Delta_{\widehat{f}}^\top))$, 
    by \lemref{lem:mutual}, Lemma~5.3 in~\citep{SrinivasN2010icml}, 
    and $\alpha \leq 2 \log(1 + \alpha)$ for $0 \leq \alpha \leq 1$.
    Finally, by the definition of entropy and the Hadamard's inequality, 
    \eqref{eqn:R_T_entropy} is expressed as
    \begin{align}
        R_T &\leq 4 \beta_T \sqrt{(\sigma_n^{2} + c \varepsilon^2) T \log \lvert 2 \pi e \Sigma \rvert} \nonumber\\
        &\leq 4 \beta_T \sqrt{(\sigma_n^{2} + c \varepsilon^2) N T \log (2 \pi e k_{\textrm{max}})} \nonumber\\
        &\in \calO \left( \beta_T \sqrt{(\sigma_n^{2} + \varepsilon^2) N T} \right) \nonumber\\
        &= \calO \left( \sqrt{(\sigma_n^{2} + \varepsilon^2) N T \log (\delta^{-1} N T}) \right),
    \end{align}
    where $N$ is the cardinality of $\calC$ and $k_{\textrm{max}}$ is the maximum value of kernel $k$.
\end{proof}

\section{Discussion}\label{sec:discussion}

In this section we provide more detailed discussion on combinatorial Bayesian optimization and our methods.

\paragraph{Acquisition function optimization.}
Unlike the existing methods and other baselines, 
CBO-Recon and CBO-Lookup should find a query point 
on convex polytope $\calX$.
If a random matrix $\bR$ is constructed by a distribution supported on a bounded interval, 
$\calX$ is naturally compact.
Thus, CBO-Recon and CBO-Lookup optimize an acquisition function on 
the convex space determined by $\bR$.

\paragraph{Rounding to binary variables.}
To recover a covariate on $\bbR^d$ to $\calC$, 
rounding to binary variables is required in REMBO and CBO-Recon as well as other existing methods.
These methods must choose a proper threshold between 0 and 1 
after scaling them between that range~\citep{HutterF2011lion,WangZ2016jair}, 
but choosing a rule-of-thumb threshold is infeasible because an objective is unknown 
and we cannot validate the threshold without direct access to the objective.
On the contrary, CBO-Lookup does not require any additional hyperparameter 
for rounding to binary variables.

\paragraph{$L_1$ regularization for sparsity.}
As proposed and discussed in~\citep{BaptistaR2018icml,OhC2019neurips}, 
the objective for combinatorial optimization is penalized by 
$L_1$ regularization.
This regularization technique is widely used in machine learning 
in order to induce a sparsity~\citep{TibshiraniR1996jrssb} and avoid over-fitting.
In this discussion, we rethink this sparsification technique for combinatorial structure.
A representation on the Hamming space is not always sparse, 
because, for example, a solution can be $[1, 1, \ldots, 1]$, which is not sparse.

\paragraph{Complexity analysis.}
The time complexity of common Bayesian optimization is $\calO(T^3)$ 
where $T$ is the number of iterations (i.e., we assume the number of observations is proportional to 
$T$, because it evaluates a single input at every iteration), due to the inverse of covariance matrix.
Our CBO-Lookup holds the time complexity of common Bayesian optimization, 
and moreover with the Cholesky decomposition it can be reduced 
to $\calO(T^3 / 6)$~\citep{RasmussenCE2006book}.
Compared to BOCS proposed by~\cite{BaptistaR2018icml}, 
our algorithm has lower complexity than the BOCS variants.
The CBO-Lookup algorithm requires the space complexity $\calO(md)$, 
because it constructs a lookup table.
However, we can speed up this pre-processing stage 
by preemptively constructing the table before the optimization process 
and searching the table with a hash function~\citep{NayebiA2019icml}.
In practice, the combinatorial space of the order of $2^{24}$ requires less than 2 GB.

\paragraph{Regret bound.}
As described in the analysis section, \thmref{thm:upper} supports our intuition 
in terms of $\varepsilon^2$, $\sigma_n^2$, $\delta$, $N$, and $T$.
However, since we prove the cumulative regret bound for any kernel 
that is bounded and at least once differentiable, 
it is considered as a loose bound.
According to other theoretical studies~\citep{SrinivasN2010icml,ChowdhurySR2017icml,ScarlettJ2018icml}, 
this kernel-agnostic bound is readily expanded into a tighter bound, assuming a specific kernel.
To focus on the problem of combinatorial Bayesian optimization itself and moderate the scope of this work, 
the analysis for tighter bound will be left to the future work.

\section{Experiments}\label{sec:experiments}

We test the experiments on combinatorial optimization: 
thumbs-up maximization, seesaw equilibrium, binary quadratic programming, 
Ising model sparsification, and subset selection.
We first introduce the experimental setups.

\subsection{Experimental setup}

To optimize objectives over combinations, 
we use the following combinatorial Bayesian optimization strategies:
\begin{itemize}
    \item Random: It is a simple, but practical random search strategy~\citep{BergstraJ2012jmlr}.
    \item Bin-AA: It uses a surrogate model with AA kernel~\citep{AitchisonJ1976biometrika}, over binary variables.
    \item Bin-Round: The surrogate model is defined on $[0, 1]^m$ and acquired points are determined by rounding. The threshold for rounding is $0.5$.
    \item Dec-Round: Binary representation is converted to a decimal number.
    Then, common Bayesian optimization is conducted on $[0, 2^m - 1]$.
    \item SMAC: It is a sequential model-based optimization with random forests~\citep{HutterF2011lion}.
    It optimizes on binary space.
    \item BOCS-SA/BOCS-SDP: They are proposed by \cite{BaptistaR2018icml}. We use their official implementation and follow their settings. Unless otherwise specified, $\lambda$ is set to zero.
    \item COMBO: This approach~\citep{OhC2019neurips} constructs a surrogate function on the space of Cartesian product of graphs. Each graph corresponds to a categorical variable and a diffusion kernel over graphs is used to measure the similarity between combinations.
    \item REMBO/CBO-Recon: These strategies find the next query point on $\bbR^d$ and recover it to a combination with rounding.
    To round to a binary variable, thresholds for REMBO and CBO-Recon are set to $0.25$ and $0.02$, respectively. If it is not specified, we use $d = 20$.
    \item CBO-Lookup: It determines the next query point after mapping a combination to $\bbR^d$, 
    and recover it to a combination with a pre-defined lookup table. We use the same setting of REMBO and CBO-Recon for $d$.
\end{itemize}

To compare all the methods fairly, we set the same initializations across all the methods, 
by fixing a random seed for the same runs.
The number of initializations is set to 1 for most of experiments.
Gaussian process regression with Mat\'ern 5/2 kernel is employed as a surrogate model 
for Bin-Round, Dec-Round, REMBO, CBO-Recon, and CBO-Lookup.
The GP-UCB acquisition function is used as an acquisition function for Bin-AA, REMBO, CBO-Recon, and CBO-Lookup.
For COMBO, we use the official implementation provided by~\cite{OhC2019neurips},
but we modify some setups, e.g., initial points, for fair comparisons.
The expected improvement criterion is applied for Bin-Round and Dec-Round.
The hyperparameters of Gaussian process regression are optimized by marginal likelihood maximization.
All the experiments are repeated 10 times unless otherwise specified.

\begin{figure}[t]
\centering
    \subfigure[Thumbs-up (20D)]{
        \includegraphics[width=0.225\textwidth]{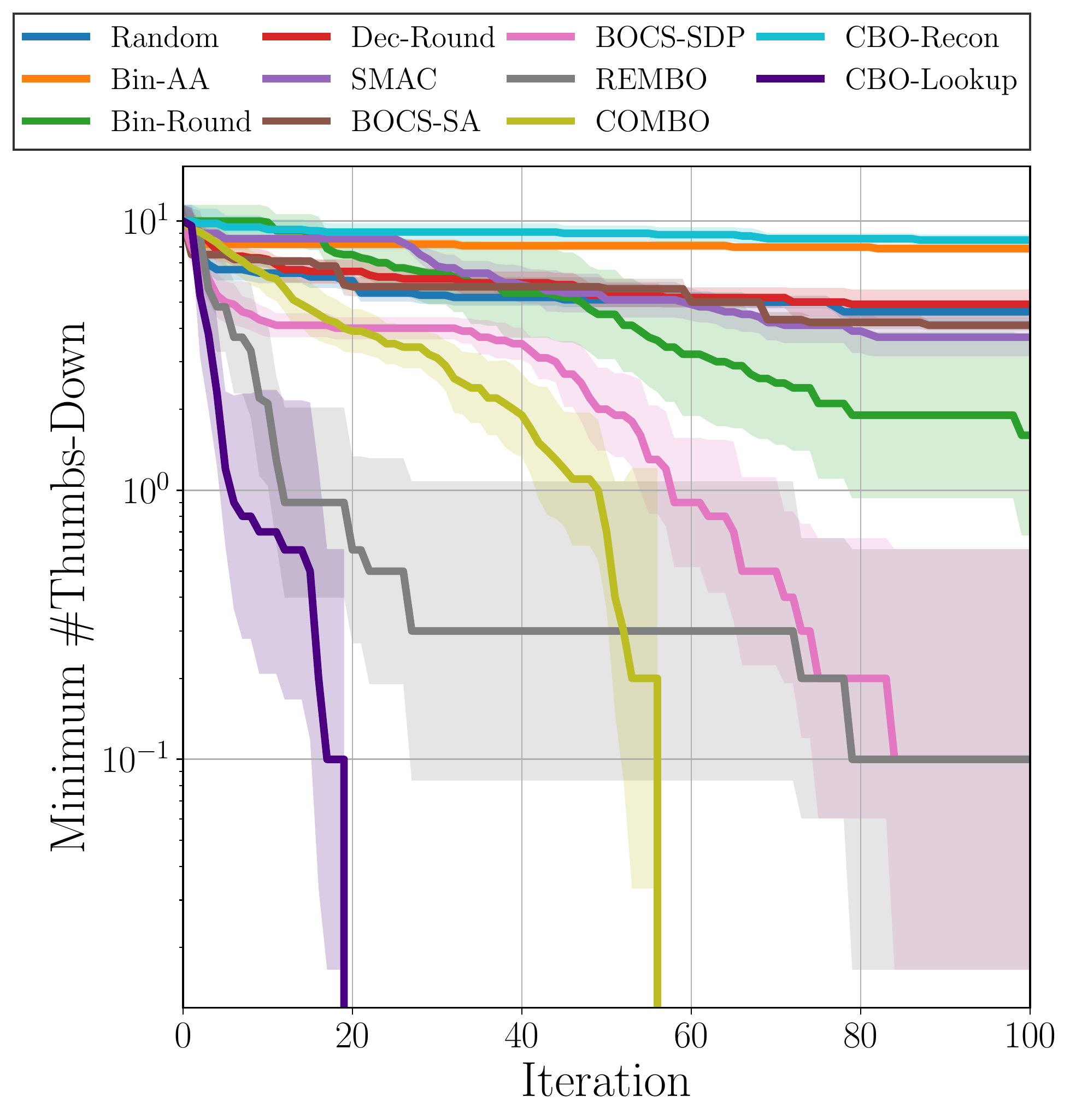}
        \label{subfig:thumbsup}
    }
    \subfigure[Seesaw (24D)]{
        \includegraphics[width=0.225\textwidth]{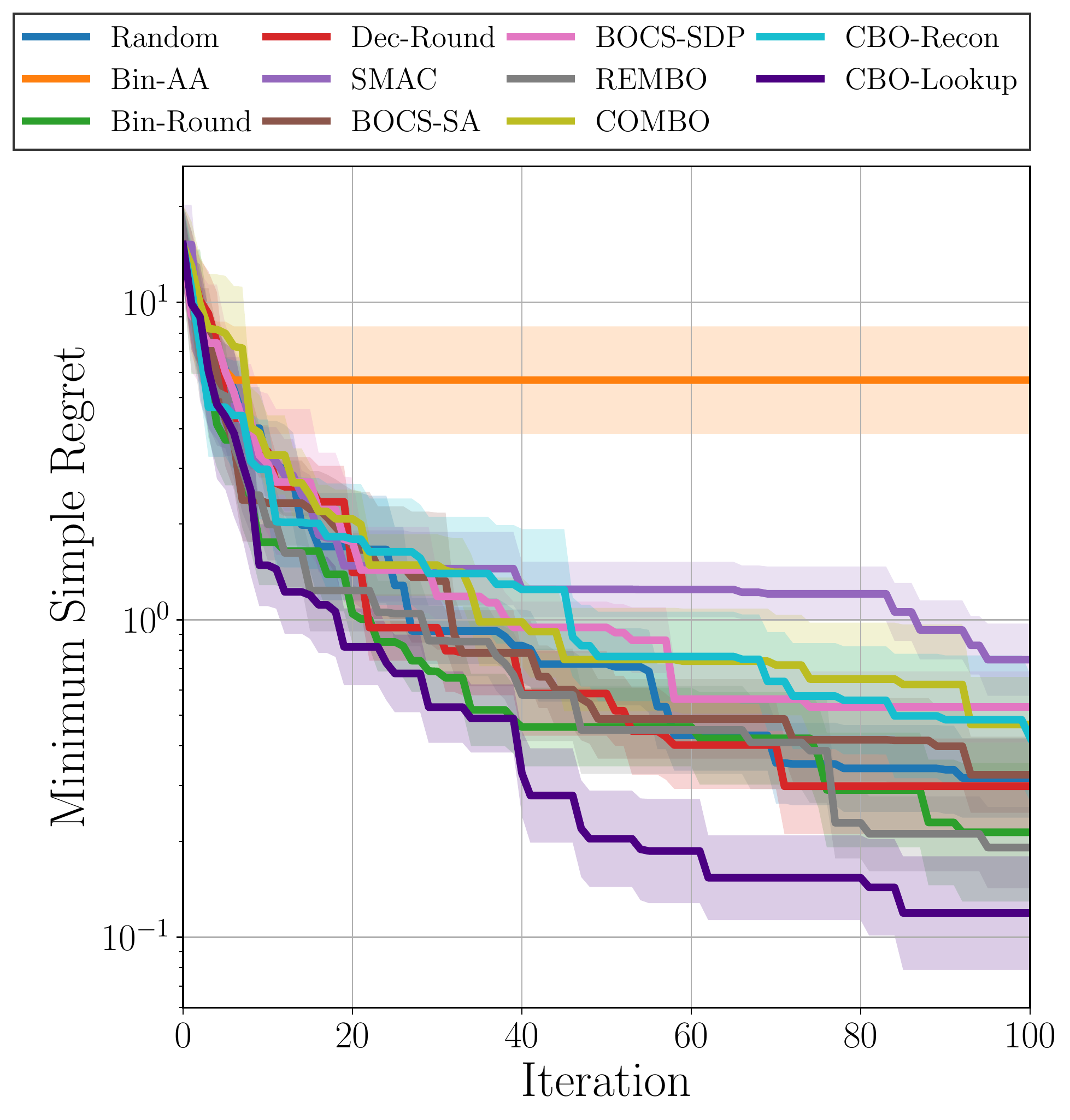}
        \label{subfig:seesaw}
    }
    \caption{Bayesian optimization results on thumbs-up maximization and seesaw equilibrium. All the experiments are repeated 10 times.}
    \label{fig:exp_1}
\end{figure}

\begin{figure}[t]
\centering
    \subfigure[BQP (10D, $\lambda = 1.0$)]{
        \includegraphics[width=0.225\textwidth]{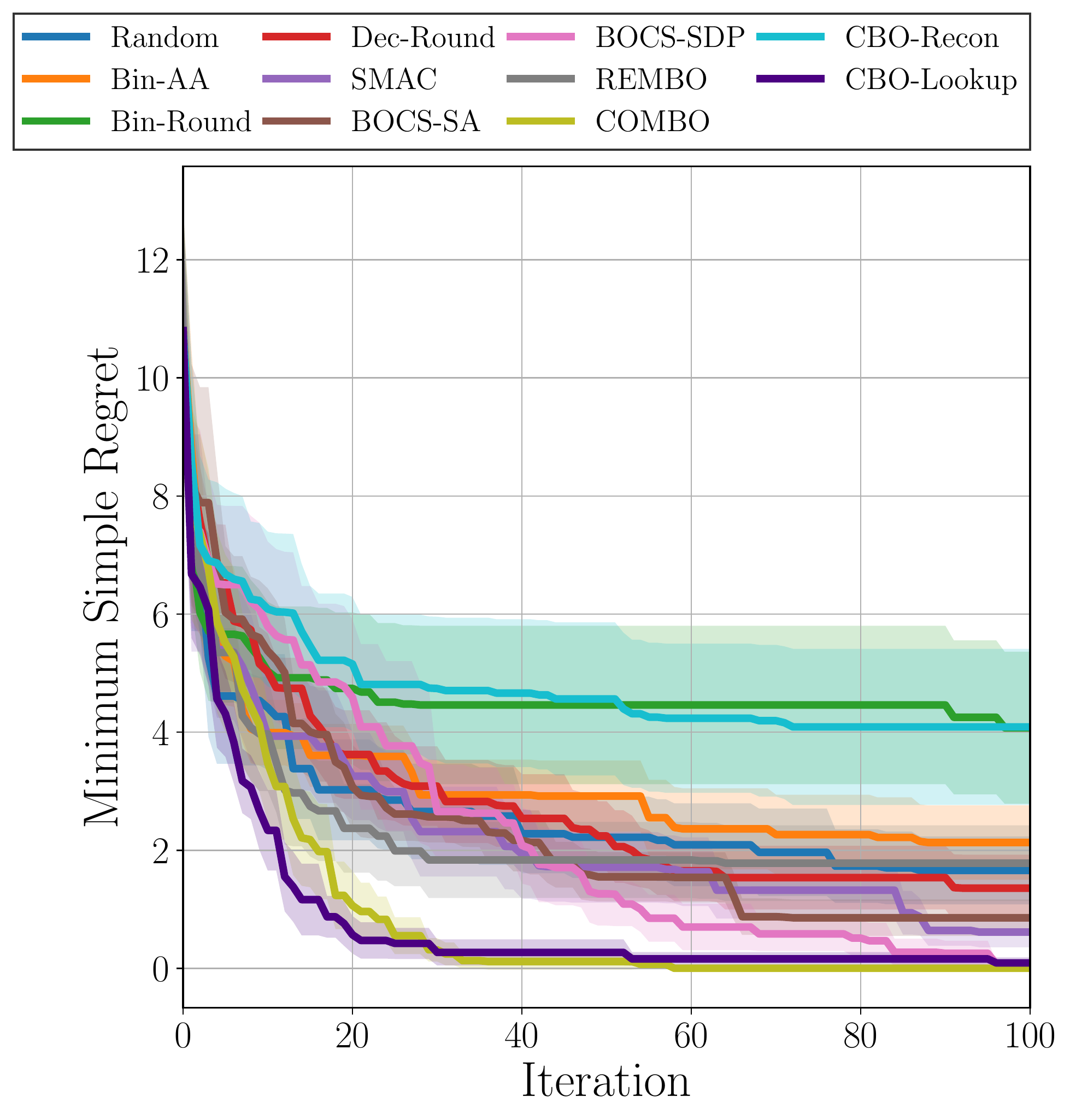}
        \label{subfig:bqp_1}
    }
    \subfigure[Ising (24D, $\lambda = 1.0$)]{
    	\includegraphics[width=0.225\textwidth]{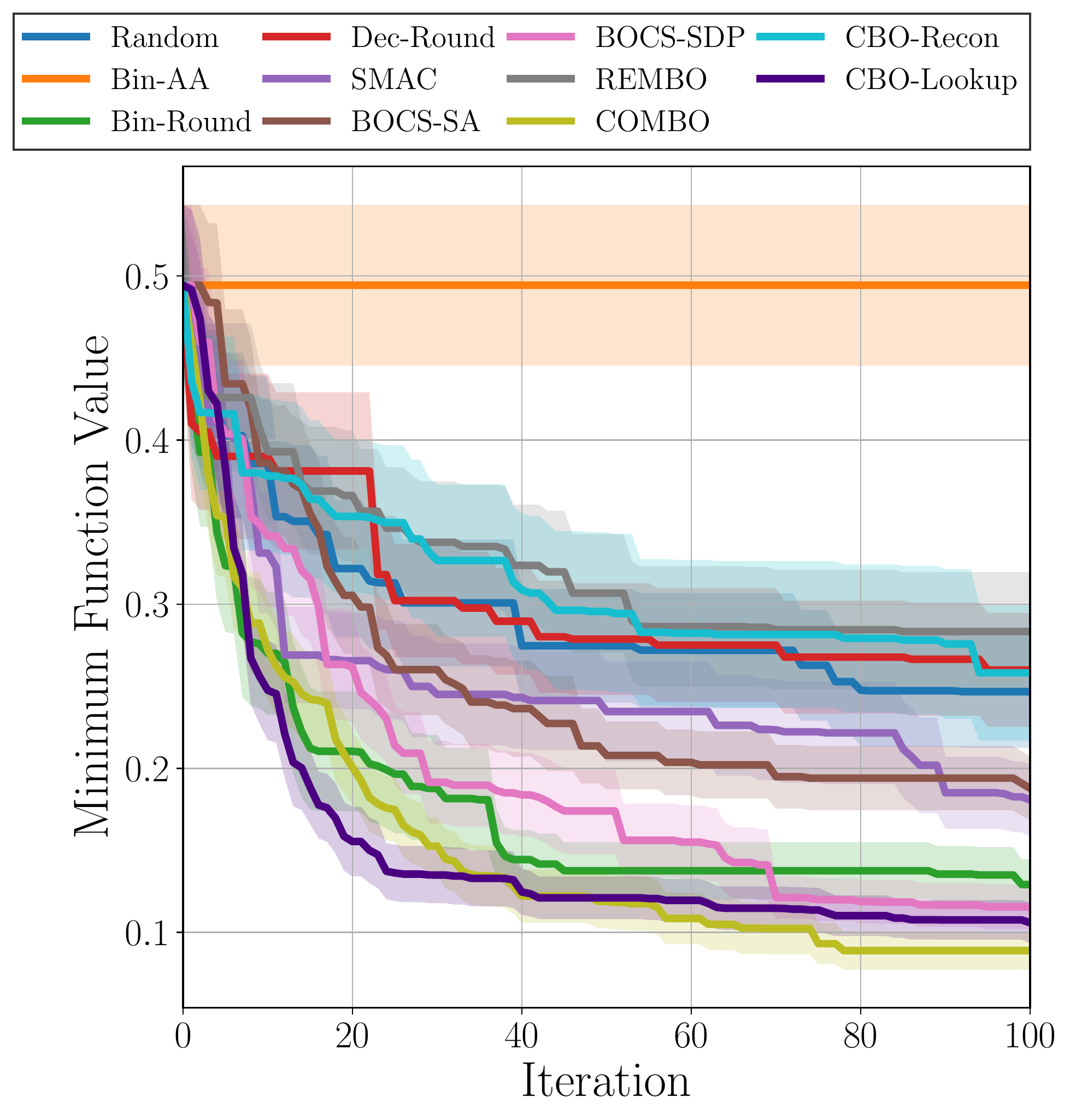}
	\label{subfig:ising_2}
    }
    \caption{Bayesian optimization results on binary quadratic programming and Ising model sparsification. All the experiments are repeated 10 times.}
    \label{fig:exp_2}
\end{figure}

\begin{figure}[t]
	\centering
	\subfigure[Sparse regression]{
		\includegraphics[width=0.225\textwidth]{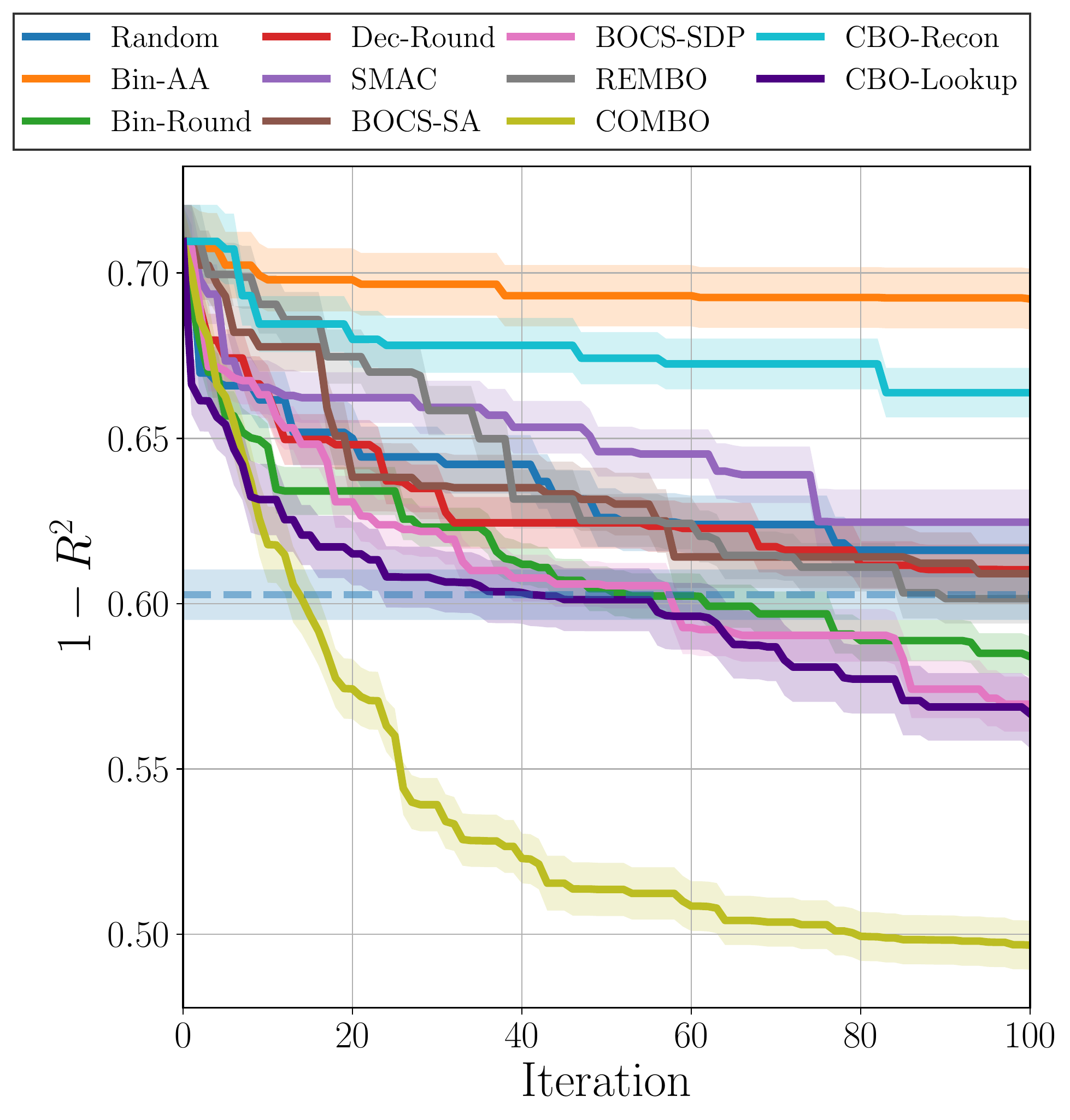}
		\label{subfig:subset_2}
	}
	\subfigure[Sparse classification]{
		\includegraphics[width=0.225\textwidth]{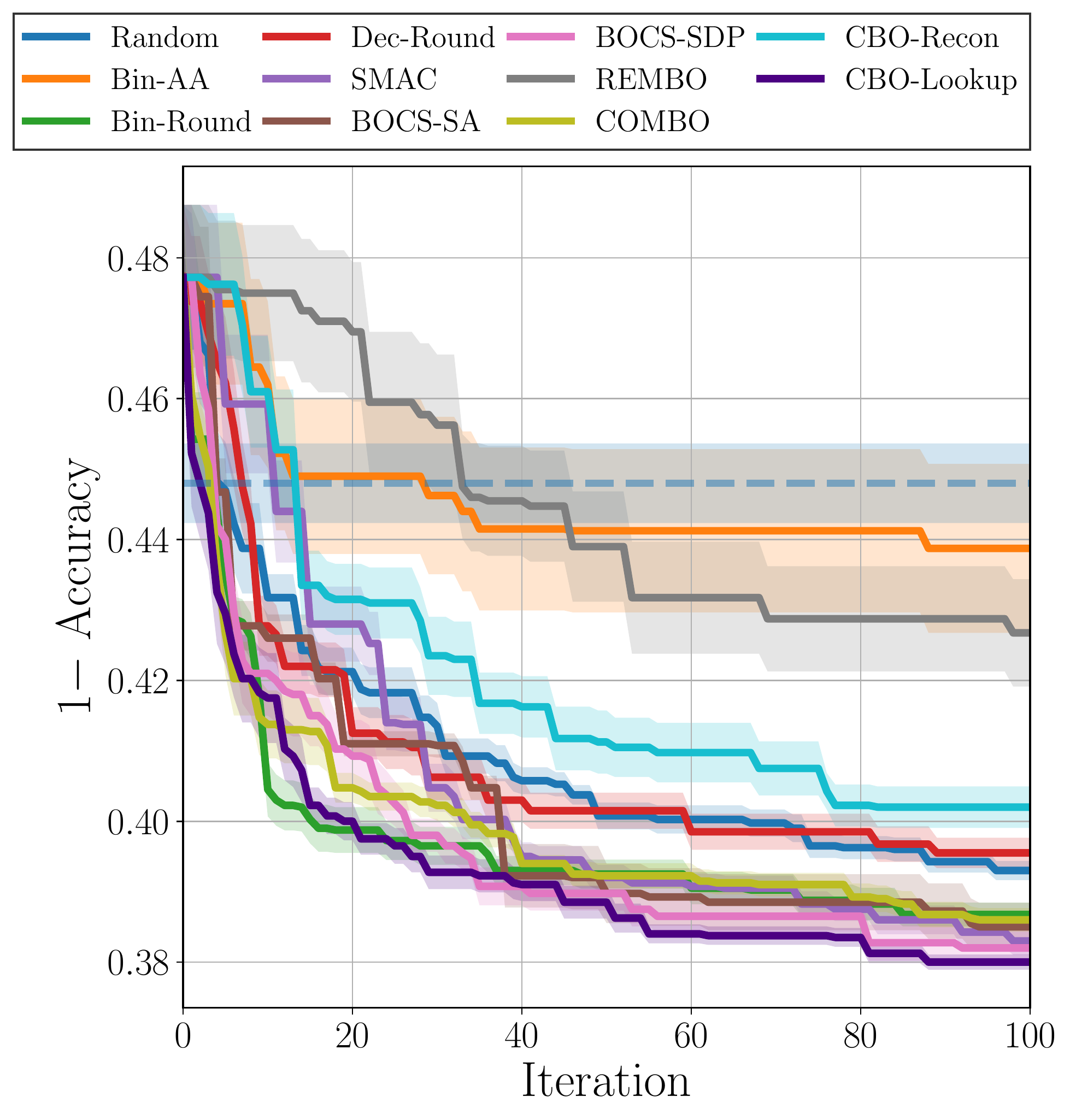}
		\label{subfig:subset_3}
	}
	\caption{Results on subset selection, where $\rho = 0.01$ and $\nu = 1.0$. Dotted lines are the ones with $L_1$ regularization. All the experiments are repeated 10 times.}
	\label{fig:exp_3_4_5}
\end{figure}

\subsection{Thumbs-up maximization}
This experiment is an artificial circumstance to maximize the number of thumbs-up 
where the total number of thumbs-up and thumbs-down is fixed as $m$.
As shown in \figref{subfig:thumbsup}, CBO-Lookup performs well compared to other methods.

\subsection{Seesaw equilibrium}
It is a combinatorial problem for balancing the total torques of weights on a seesaw, 
which has $m / 2$ weights on the left side of seesaw and the other $m / 2$ weights on the right side,
where $m$ is an even number.
The weight $w_i$ is uniformly chosen from $[0.5, 2.5]$ for all $i \in \{0, \ldots, m - 1\}$.
Each torque for $w_i$ is computed by $r_i w_i$ where $r_i$ is the distance from the pivot of seesaw, 
and we set a distance between $w_i$ and the pivot as $r_i = i - m / 2$ 
for the weights on the left side and $r_i = i - m / 2 + 1$ for the weights on the right side.
As presented in \figref{subfig:seesaw}, CBO-Lookup achieves the best result in this experiment.

\subsection{Binary quadratic programming}
This experiment is to minimize an objective of binary quadratic programming problem 
with $L_1$ regularization,
\begin{equation}
f(\bc) = \phi(\bc)^\top \mathbf{Q} \phi(\bc) + \lambda \| \phi(\bc) \|_1,
\end{equation}
where $\phi$ is a mapping function to binary variables, 
$\mathbf{Q} \in \bbR^{m \times m}$ is a random matrix, 
and $\lambda$ is a coefficient for $L_1$ regularization.
We follow the settings proposed by \citep{BaptistaR2018icml}.
Similar to thumbs-up maximization and seesaw equilibrium, 
CBO-Lookup shows consistent performance compared to other baselines; see \figref{subfig:bqp_1}.

\subsection{Ising model sparsification}
This problem finds an approximate distribution $q(\phi(\bc))$, 
which is close to
\begin{equation}
    p(\phi(\bc)) = \frac{\exp(\phi(\bc)^\top J^p \phi(\bc))}{\sum_{\bc'} \exp(\phi(\bc')^\top J^p \phi(\bc'))},
\end{equation}
where $\phi$ is a mapping function to $\{-1, 1\}^m$ 
and $J^p$ is a symmetric interaction matrix for Ising models.
To find $q(\phi(\bc))$, we compute and minimize the Kullback-Leibler divergence between two distributions 
in the presence of $L_1$ regularization.
Our implementation for this experiment refers to the open-source repository of \citep{BaptistaR2018icml}.
CBO-Lookup is comparable to or better than other methods, as shown in \figref{subfig:ising_2}.

\subsection{Subset selection}
A regularization technique such as Lasso~\citep{TibshiraniR1996jrssb} has been 
widely used to solve this problem, where some variables are less correlated to the other variables, 
but in this experiment we solve this variable selection using combinatorial Bayesian optimization.
Our target tasks are categorized as two classes: 
(i) a sparse regression task, i.e., \figref{subfig:subset_2}, 
and (ii) a sparse classification task, i.e., \figref{subfig:subset_3}~\citep{HastieT2015book}.
For a fair comparison, we split the datasets into training, validation, and test datasets.
The validation dataset is used to select variables using Bayesian optimization, 
and the evaluation for the selected variables of test dataset is reported.

To conduct the Bayesian optimization methods on a sparse regression task, 
we follow the data creation protocol described in~\citep{HastieT2017arxiv}.
Given the number of data $n$, the dimension of data $p$, a sparsity level $s$, 
a predictor correlation level $\rho$, a signal-to-noise ratio level $\nu$, 
our regression model is defined as $\by = \bX \boldsymbol{\beta}$, 
where $\by \in \bbR^n$, $\bX \in \bbR^{n \times p}$, and $\boldsymbol{\beta} \in \bbR^p$.
We use beta-type 2~\citep{BertsimasD2016aos} with $s$ for $\boldsymbol{\beta}$ and draw $\bX$ from $\calN(\mathbf{0}, \boldsymbol{\Sigma})$ 
where $[\boldsymbol{\Sigma}]_{ij} = \rho^{\lvert i - j \rvert}$ for all $i, j \in [p]$.
Then, we draw $\by$ from $\calN(\bX \boldsymbol{\beta}, \sigma^2 \bI)$, 
where $\sigma^2 = (\boldsymbol{\beta}^\top \boldsymbol{\Sigma} \boldsymbol{\beta}) / \nu$.

For a sparse classification, we employ Mobile Price Classification dataset~\citep{SharmaA2018kaggle}, 
of which the variables are less correlated.
The base classifier for all the experiments are support vector machines 
and the variables selected by Bayesian optimization are given to train and test the dataset.
As presented in \figref{subfig:subset_3}, our method shows the better result than other methods 
in this experiment.

\paragraph{Comparisons with COMBO.}
As is generally known in the Bayesian optimization community, 
COMBO~\citep{OhC2019neurips} shows consistent performance in the experiments run in this work.
Since COMBO uses graph-structured representation as the representation of categorical variables, 
it directly seeks an optimal combination on graphs by considering the connectivity between variables.
This capability allows us to effectively find the optimum.
Compared to the known sophisticated COMBO method, 
CBO-Lookup also shows consistent performance across various tasks as shown in \figref{fig:exp_1}, 
\figref{subfig:bqp_1}, \figref{subfig:ising_2}, and \figref{subfig:subset_3}, 
and guarantees a sublinear cumulative regret bound with a low-distortion embedding 
as shown in~\thmref{thm:upper}.
Moreover, our method is beneficial for cheap computational complexity and easy implementation, 
by utilizing any off-the-shelf implementation of vector-based Bayesian optimization.
In addition to these, without loss of generality, our method is capable of 
optimizing a black-box objective on a high-dimensional mixed-variable space 
by embedding a high-dimensional mixed-variable space to a low-dimensional continuous space.
Unfortunately, it is unlikely that COMBO can solve the optimization problem defined on a high-dimensional mixed-variable space.

\section{Conclusion}\label{sec:conclusion}

In this work, we analyze the aspect of combinatorial Bayesian optimization 
and highlight the difficulty of this problem formulation.
Then, we propose a combinatorial Bayesian optimization method with a random mapping function, 
which guarantees a sublinear cumulative regret bound.
Our numerical results demonstrate that our method shows satisfactory performance compared to other existing methods 
in various combinatorial problems.

\begin{contributions}
J. Kim led this work, suggested the main idea of this paper, implemented the proposed methods, and wrote the paper.
S. Choi and M. Cho are co-corresponding authors and the order of them is randomly determined.
S. Choi and M. Cho advised this work, improved the proposed methods, and wrote the paper.
Along with these contributions, S. Choi introduced the fundamental backgrounds of this work,
and M. Cho substantially revised the paper.
\end{contributions}

\begin{acknowledgements}
This work was supported by Samsung Research Funding \& Incubation Center of Samsung Electronics under Project Number SRFC-TF2103-02.
\end{acknowledgements}

\bibliography{kjt}

\end{document}